\newcommand{\half}{}
\renewcommand{\a}{\mathbf{a}}
\newcommand{\ah}{\hat{\a}}
\newcommand{\at}{\tilde{\a}}
\newcommand{\e}{\mathbf{e}}
\renewcommand{\u}{\mathbf{u}}
\newcommand{\s}{\mathbf{s}}
\renewcommand{\v}{\mathbf{v}}
\newcommand{\q}{\mathbf{q}}
\newcommand{\qd}{\dot{\q}}
\newcommand{\qdd}{\ddot{\q}}
\newcommand{\qt}{\tilde{\q}}
\newcommand{\qtd}{\dot{\qt}}
\renewcommand{\P}{\mathbf{P}}
\newcommand{\Llambda}{\boldsymbol{\Lambda}}
\newcommand{\btau}{\boldsymbol{\tau}}
\newcommand{\g}{\mathbf{g}}
\newcommand{\gh}{\hat{\g}}
\newcommand{\T}{^{T}}
\newcommand{\A}{\mathbf{A}}
\renewcommand{\L}{\mathbf{L}}
\newcommand{\K}{\mathbf{K}}
\newcommand{\G}{\mathbf{G}}
\newcommand{\Y}{\mathbf{Y}}
\newcommand{\I}{\mathbf{I}}
\newcommand{\Q}{\mathbf{Q}}
\newcommand{\W}{\mathbf{W}}
\newcommand{\M}{\mathbf{M}}
\renewcommand{\H}{\mathbf{H}}
\newcommand{\Hh}{\hat{\H}}
\newcommand{\C}{\mathbf{C}}
\newcommand{\Ch}{\hat{\C}}
\renewcommand{\d}{{\rm d}}
\renewcommand{\half}{\frac{1}{2}}
\newcommand{\delt}[2][]{\ifthenelse{\equal{#1}{}} { \delta \mathbf{#2}}{\delta \mathbf{#2}_{#1}}}
\newtheorem{theorem}{Theorem}
\newtheorem{remark}{Remark}
\newtheorem{definition}{Definition}
\renewcommand{\T}{^T}
\begin{document}

\title{Cooperative Adaptive Control for Cloud-Based Robotics\\[-1.5ex]}

\author{Patrick M. Wensing and Jean-Jacques Slotine\\[-2.5ex] 
\thanks{P. M. Wensing is an Assistant Professor in the Dept. Aerospace and Mech. Engineering, 
University of Notre Dame: {\tt\small pwensing@nd.edu}}
\thanks{\noindent J.-J. Slotine is a Professor in the Dept. of Mech. Engineering, 
        Massachusetts Institute of Technology: 
        {\tt\small jjs@mit.edu}}
}

\maketitle

\begin{abstract}
This paper studies collaboration through the cloud in the context of cooperative adaptive control for robot manipulators. We first consider the case of multiple robots manipulating a common object through synchronous centralized update laws to identify unknown inertial parameters. Through this development, we introduce a notion of Collective Sufficient Richness, wherein parameter convergence can be enabled through teamwork in the group. The introduction of this property and the analysis of stable adaptive controllers that benefit from it constitute the main new contributions of this work. Building on this original example, we then consider decentralized update laws, time-varying network topologies, and the influence of communication delays on this process. Perhaps surprisingly, these nonidealized networked conditions inherit the same benefits of convergence being determined through collective effects for the group. Simple simulations of a planar manipulator identifying an unknown load are provided to illustrate the central idea and benefits of Collective Sufficient Richness.

\end{abstract}

\section{Introduction}
With the dawn of Internet 4.0, next-generation robots present the opportunity to benefit from dynamic global data sets through networking in the ``Cloud'' \cite{Kehoe15}. Sensing and computation may be shared through non-collocated agents distributed across wide spatial and geographic extents. Learning algorithms in applications ranging from factory automation to household cleaning may commonly benefit through shared adaptation to the challenges of the real world.  However, the dynamic nature of data/experience shared through the cloud may present as many difficulties as it does opportunities. Coupling between the complex networked updates to these datasets introduces interesting questions regarding stability of the updates and their corresponding effects on the dynamics of individual agents. In large, these considerations have been ignored in the literature, due in part to difficulties in stability analysis that arise from the nonlinear dynamics of robots as well as the time delays that are an inevitable feature of interface with cloud-based data.
	
This paper makes steps toward addressing this challenge through considering a prototype problem of cooperative adaptive control through the cloud. Adaptive control schemes seek to learn unknown parameters that affect closed-loop control outcomes while simultaneously ensuring stable trajectory tracking. This problem has a rich history \cite{Slotine87,Slotine89,Ghorbel89,Sadegh90,Niemeyer91,Yuan95,Chung09} to cite just a few (see also \cite{SlotineLi91}). This classical work has seen renewed interest (e.g., \cite{Pucci15,Bicchi17}), due in part to the increasing availability of torque-controlled robotics platforms \cite{Semini11,Englsberger14b,Hutter14b,Wensing17b}. Here we treat the case of a heterogeneous team of manipulators working with a common object that has unknown parameters, as shown in Fig.~\ref{fig:Cloud}. Such a case might occur with, for instance, multiple robots learning in tandem \cite{Levine16}. Although learning and adaptation are traditionally identified as separate in spirit, connections and commonalities (particularly for model-based sensorimotor learning) offer interesting future prospects. 
	
	The stability of networked adaptive robot control has been treated previously in a variety of other contexts. Chung and Slotine \cite{Chung09} studied synchronization of networked robots, wherein individual agents could adapt their own parameters. Schwager et al. \cite{Schwager09} studied active collective sensing with kinematic robot models and shared adaptation to learn environmental features. The current work instead considers dynamic robot models, with shared adaptation under time delays. This later consideration is of particular interest within the context of cloud-based knowledge sharing.  
	

	In addressing convergence to a common representation of knowledge across agents, this current work shares a great deal in common with the literature on consensus phenomena \cite{Smale07,Wang06b}. Here, we show the benefits of enforcing consensus in cooperative adaptive control. Namely,  conditions on convergence of shared knowledge to an optimum value, more technically described as conditions on sufficient richness of the learning signals, inherit relaxed requirements through teaming. The introduction of a concept of Collective Sufficient Richness is introduced, with its supporting analysis for cooperative adaptive control constituting the main contribution of this work.


\begin{figure}
\center
\includegraphics[width = .78\columnwidth]{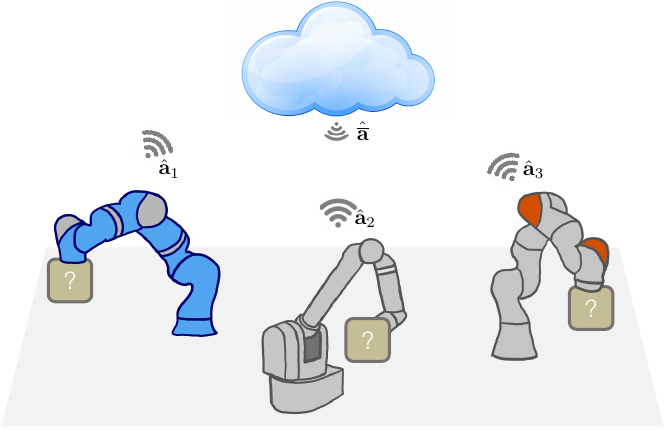}
\caption{This paper considers the classical problem of adaptive control when approached collectively through cloud-based collaboration. A team of manipulators seeks to identify a common unknown load.}
\vspace{-5px}
\label{fig:Cloud}

\end{figure}
	
	The paper is structured as follows. Section \ref{sec:prelim} describes preliminary background on manipulator adaptive control.  Section \ref{sec:synchronous} discusses the case of cooperative adaptive control under idealized network conditions, illustrating the role of collective sufficient richness in parameter convergence. Sections \ref{sec:networked} and \ref{sec:delay} then extend this analysis to handle decentralized and time-delayed communications, which may be more realistic in practice. Section \ref{sec:Composite} discusses how modeling error may be fused with tracking errors to further improve the performance of the collaborative adaptive control laws. Section \ref{sec:sim} illustrates the central benefits of the theory through a simple example in simulation.

\section{Preliminaries}
\label{sec:prelim}
In this section, we briefly introduce classical results in adaptive control for a single manipulator. We consider a traditional setting, wherein inertial parameters are identified while tracking a desired trajectory. We begin by recalling the framework of Direct Adaptive Control from \cite{Slotine87,Niemeyer91}, which uses tracking error to estimate parameters. 

\subsection{Direct Adaptive Control: Setup}

Consider the standard manipulator equations for a rigid-body system with $n_b$ bodies and $n_j$ degrees of freedom
\begin{equation}
\H(\q)\,  \qdd + \C(\q,\qd)\, \qd + \g(\q) = \btau
\label{eq:sysDyn}
\end{equation}
where $\q \in \mathbb{R}^{n_j}$ the vector of joint angles, $\H\in \mathbb{R}^{n_j \times n_j}$ the mass matrix, $\C\qd \in \mathbb{R}^{n_j}$ the Coriolis and centripetal terms, $\g \in \mathbb{R}^{n_j}$ the generalized gravitation force, and $\btau \in \mathbb{R}^{n_j}$ the vector of actuator torques. It is well known that one can define a vector of inertial parameters $\a\in \mathbb{R}^{10 n_b}$ that collects unknown masses, first mass moments, and moments of inertia, and that the inverse dynamics defined by \eqref{eq:sysDyn} are linear in these terms \cite{Slotine87}.
 
We consider the case of a desired trajectory $\q_d(t)\in \mathbb{R}^{n_j}$ with $\qd_d$, $\qdd_d$ bounded and  $\qdd_d$ uniformly continuous. The tracking error is defined as
\[
\qt = \q - \q_d
\] 
As a matter of notation, we denote reference velocities and accelerations as
\begin{align}
\qd_r &= \qd_d - \Llambda \qt & \qdd_r &= \qdd_d - \Llambda \qtd \nonumber 
\end{align}
where $-\Llambda$ is a constant Hurwitz matrix. If we further define
\[
\s = \qtd_r = \qd - \qd_r = \qtd + \boldsymbol{\Lambda} \qt 
\]
then regulation of $\s$ to zero renders the sliding surface defined by
\[
\qtd + \boldsymbol{\Lambda} \qt = {\mathbf{0}}
\]
to be invariant. By design, the restriction dynamics on this surface ensure $\qt$ is regulated to zero. That is, if $\s\rightarrow \mathbf{0}$ then $\qt \rightarrow \mathbf{0}$ due to the definition of $\s$. Thus, with this strategy, tracking control of $[\q_d\T,\qd_d\T]\T\in \mathbb{R}^{2n_j}$ has been reduced to the challenge of regulating $\s \in \mathbb{R}^{n_j}$ to zero. 

Original work in \cite{Slotine87} introduced the adaptive control law
\begin{align}
\btau &= \Hh(\q) \qdd_r + \Ch(\q,\qd) \qd_r + \gh(\q) - \K_D \s 
\label{eq:controlLaw}
\end{align}
where $\Hh$, $\Ch$, $\gh$ are estimates of the structural components of the equations of motion based on estimated parameters $\ah$. Letting the parameter error vector $\at = \ah - \a$, this choice of control law provides the following closed-loop dynamics
\begin{align}
\H \,\dot{\s} + (\K_D + \C) \s = \Y(\qdd_r,\qd_r, \qd,\q) \at
\label{eq:sdyn}
\end{align}
where $\Y(\qdd_r,\qd_r, \qd,\q)$ is the regressor \cite{Slotine87} such that
\begin{align}
\Y(\qdd_r,\qd_r, \qd,\q) \ah = \Hh(\q) \qdd_r + \Ch(\q,\qd) \qd_r  + \gh(\q) \nonumber
\end{align}
for all $\ah \in \mathbb{R}^{10 n_b}$. 

\subsection{Direct Adaptive Control: Convergence Analysis}
Considering a Lyapunov function
\[
V = \half \s\T \H \s +  \half \at\T \P^{-1} \at
\]
with $\P=\P\T$ a fixed positive definite matrix shows that
\begin{equation}
\dot{V} = -\s\T \K_D \s + \s\T \Y \at + \at\T \P^{-1} \dot{\at} 
\label{eq:dV}
\end{equation}
Yet, since $\dot{\at}  = \dot{\ah}$, this suggests the following update law
\begin{align}
\dot{\ah} &= - \P\, \Y\T \s
\label{eq:updateLaw1}
\end{align}
where $\P$ is interpreted as the adaptation gain. Under the update law \eqref{eq:updateLaw1}, \eqref{eq:dV} simplifies to
\[
\dot{V} = -\s\T \K_D \s 
\]
Under the previous smoothness and boundedness assumptions for $\q_d(t)$, it can be shown that $\s \in \mathcal{L}_2\cap \mathcal{L}_\infty$, $\at \in \mathcal{L}_\infty$, and $\s\rightarrow \mathbf{0}$ as $t \rightarrow \infty$. Further, classical arguments \cite{Slotine87} show that if the trajectories are sufficiently rich, then parameter errors $\at \rightarrow \mathbf{0}$ as $t \rightarrow \infty$ as well. This sufficient richness condition is phrased mathematically by requiring so-called persistency of excitation as defined below.

\begin{definition}[Persistently Exciting]
A time varying matrix-valued quantity $\W(t)$ is said to be persistently exciting if there exists a $T>0$ and excitation level $\alpha>0$  such that 
\[
\frac{1}{T}\int_t^{t + T} \W\T(\tau) \W(\tau) \d \tau \ge \alpha \I \quad \forall t
\]
\end{definition}

\begin{remark}
More exactly, the definition of persistency of excitation above has subtle differences with the associated notion of sufficient richness \cite{SlotineLi87c}. However, under mild assumptions on the reference trajectories (uniform continuity of $\qdd_d$, and boundedness of $\q_d$, $\qd_d$ and $\qdd_d$), the two are equivalent \cite{SlotineLi87c}. Notions of sufficient richness and persistency of excitation are thus used interchangeably here. 
\end{remark}

\begin{theorem}[Parameter Convergence of Single-Robot Direct Adaptive Control \cite{SlotineLi87c}] 
Consider the system \eqref{eq:sysDyn} with the control law \eqref{eq:controlLaw} and parameter update law \eqref{eq:updateLaw1}. Then, if the regressor $\Y$ is persistently exciting, the parameter errors $\at$ satisfy $\at\rightarrow \mathbf{0}$ as $t\rightarrow \infty$.
\end{theorem}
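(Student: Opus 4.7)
The plan is to build on what the Lyapunov analysis already delivers---namely $\s \in \mathcal{L}_2 \cap \mathcal{L}_\infty$, $\at \in \mathcal{L}_\infty$, and $\s \to \mathbf{0}$ as $t \to \infty$---and to turn persistency of excitation of $\Y$ into the stronger statement $\at \to \mathbf{0}$.

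First I would verify that the regressor is well-behaved along the closed-loop trajectory. Since $\s = \qtd + \Llambda \qt$ is bounded and $-\Llambda$ is Hurwitz, both $\qt$ and $\qtd$ are bounded, and the boundedness hypotheses on $\q_d,\qd_d$ then give $\q$ and $\qd$ bounded; the reference quantity $\qdd_r = \qdd_d - \Llambda \qtd$ is bounded as well, so $\Y$ is bounded. Reading $\dot{\s}$ off \eqref{eq:sdyn} shows it is bounded, hence $\s$ is uniformly continuous and $\s \to \mathbf{0}$ follows from Barbalat's lemma. Differentiating \eqref{eq:sdyn} once more, and using uniform continuity of $\qdd_d$ to control $\dot{\Y}$, shows that $\ddot{\s}$ is bounded, so $\dot{\s}$ is itself uniformly continuous; a second Barbalat argument then yields $\dot{\s} \to \mathbf{0}$, and substituting back into \eqref{eq:sdyn} delivers $\Y(t)\,\at(t) \to \mathbf{0}$. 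The update law \eqref{eq:updateLaw1} simultaneously gives $\dot{\at} = -\P\,\Y\T\s \to \mathbf{0}$, so $\at$ drifts arbitrarily slowly for large~$t$.

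The closing step consumes the PE hypothesis. For the window $T$ and excitation level $\alpha$ supplied by the definition,
\[
\alpha\,\|\at(t)\|^2 \;\le\; \at\T(t)\Bigl[\tfrac{1}{T}\!\int_t^{t+T}\!\!\Y\T(\tau)\Y(\tau)\,\d\tau\Bigr]\at(t),
\]
and after writing $\Y\T\Y\at(t) = \Y\T\bigl(\Y\at(\tau)\bigr) - \Y\T\Y\bigl(\at(\tau)-\at(t)\bigr)$ inside the integrand, both resulting pieces vanish as $t \to \infty$: the first because $\Y$ is bounded and $\Y\at \to \mathbf{0}$, and the second because $\at(\tau)-\at(t) = \int_t^{\tau}\dot{\at}(\sigma)\,\d\sigma$ with $\dot{\at} \to \mathbf{0}$. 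Hence $\at \to \mathbf{0}$. The main obstacle is exactly this last maneuver: neither $\Y\at \to \mathbf{0}$ nor persistency of excitation of $\Y$ alone would suffice, and the conclusion hinges on the slow drift of $\at$ relative to the averaging window $T$---an observation that in turn requires pushing the Barbalat argument one derivative further than what the basic Lyapunov analysis produces.
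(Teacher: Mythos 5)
Your proof is correct in its overall architecture, but note that the paper itself offers no proof of this statement---it is quoted as a classical result from the cited reference---so the only internal point of comparison is the sketch embedded in the paper's proof of Theorem~\ref{thm:decentralized}. That sketch reveals the classical route: from $\s\rightarrow\mathbf{0}$ one shows directly that $\| \int_{t_2}^{t_2+T}\Y(t)\,\at(t_2)\,\d t\|\rightarrow 0$, essentially by writing $\Y\at=\H\dot{\s}+(\C+\K_D)\s$ and integrating the $\H\dot{\s}$ term by parts so that only $\s$ itself, not $\dot{\s}$, needs to vanish; persistency of excitation then forces $\at\rightarrow\mathbf{0}$. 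You instead push Barbalat one derivative further to obtain the pointwise statement $\dot{\s}\rightarrow\mathbf{0}$, hence $\Y\at\rightarrow\mathbf{0}$ pointwise, and your closing averaging step---splitting $\Y(\tau)\at(t)$ into $\Y(\tau)\at(\tau)$ minus a drift term controlled by $\dot{\at}\rightarrow\mathbf{0}$ over the finite window $T$---is exactly the right way to consume the PE hypothesis. The one place you overreach is the claim that $\ddot{\s}$ is bounded: differentiating \eqref{eq:sdyn} produces $\dot{\Y}$, which involves $\tfrac{\d}{\d t}\qdd_r=\dddot{\q}_d-\Llambda\,\qtdd$, and the standing assumptions supply only uniform continuity of $\qdd_d$, not existence or boundedness of its derivative. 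The repair is to avoid differentiating altogether: $\q$, $\qd$, $\qd_r$, $\s$, and $\at$ are uniformly continuous because they have bounded derivatives, $\qdd_r=\qdd_d-\Llambda\qtd$ is uniformly continuous precisely because of the hypothesis on $\qdd_d$, the trajectory remains in a compact set, and therefore $\dot{\s}=\H^{-1}\left[\Y\at-(\C+\K_D)\s\right]$ is a composition of uniformly continuous functions; Barbalat's lemma then still delivers $\dot{\s}\rightarrow\mathbf{0}$. With that adjustment your argument goes through; the classical integrated version simply sidesteps the issue by never requiring $\dot{\s}$ to converge.
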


\section{Design and Analysis of Synchronous Cloud-Based Update Laws}

\label{sec:synchronous}

\subsection{Setup}

 This section considers the previous update laws in the case of cloud-based adaptation. We consider the case of multiple heterogeneous robots manipulating separate copies of a common load object. To begin, the manipulators together adapt a shared estimate $\ah \in \mathbb{R}^{10}$ of the inertial parameters for the load. Although the load is the same between each robot, each manipulator can be different and may track different desired trajectories $\q_{i,d}(t)$. Inertial parameters for the manipulators themselves are initially assumed known.

Using the same control laws \eqref{eq:controlLaw} from the previous section, we consider the dynamics of $n$ systems
\begin{equation}
\H_i\, \dot{\s}_i + (\C_i + \K_{D_i}) \s_i = \Y_i\, \at, ~~i\in\{1,\ldots,n\}
\label{eq:systems}
\end{equation} 
 Note that while each system has its own tracking errors $\s_i$, there is a common  error vector $\at = \ah - \a \in \mathbb{R}^{10}$ for the load parameters,  since robots are initially assumed to be working with shared parameter information.

Toward analysis of this collective system, define
\begin{align}
\H &= {\rm diag}(\H_1, \ldots, \H_n) \nonumber &
~~~\K_D &= {\rm diag}(\K_{D_1}, \ldots, \K_{D_n}) \nonumber\\
\C &= {\rm diag}(\C_1, \ldots, \C_n)  \nonumber & ~~~\Y_{\!R} &= [\Y_1\T, \cdots, \Y_n\T]\T \\
\s &= [\s_1\T, \cdots, \s_n\T]\T  \nonumber
 \nonumber
\end{align}
where ${\rm diag}(\cdot)$ denotes a block diagonal construction, and the subscript on $\Y_{\!R}$ corresponds to the regressors being stacked row wise. Similar to the single robot direct case, this provides the overall dynamics compactly as
\[
\H\, \dot{\s} + (\C + \K_D)\s = \Y_{\!R}\, \at
\]
Consider a Lyapunov function with  contributions from tracking errors $\s_i$ and parameter errors $\at$
\begin{align}
V = \frac{1}{2} \at\T \P^{-1} \at + \frac{1}{2}\sum_{i=1}^n \s_i\T \H_i \s_i  \nonumber
\end{align}
Examining the terms from the tracking errors and the parameter vector:
\begin{align}
\dot{V} &= \at\T \P^{-1} \dot{\at}+ \s\T \Y_{\!R}\, \at  -\sum_{i=1}^n \s_i\T \K_{D_i} \s_i  \nonumber
\end{align}
If cloud communication is used to accomplish the update
\begin{align}
\dot{\ah} &= - \P\, \Y_{\!R}\T \s \label{eq:update2} = -\P \sum_{i=1}^n \Y_i\T \s_i 
\end{align}
which aggregates the updates of all robots, then again the Lyapunov function satisfies
\[
\dot{V} = - \sum_{i=1}^n \s_i\T \,\K_{D_i}\, \s_i = - \s\T \K_D \s
\]
Similar to the single robot case, this provides that each $\s_i \rightarrow \mathbf{0}$ as $t \rightarrow \infty$. While this may appear to simply generalize the result from the previous section, marked benefits exist to satisfy sufficient richness conditions collectively via teaming. 

\begin{definition}[Collective Persistency of Excitation]
A set of $n$ time-varying matrix-valued quantities $\{\W_1(t),\ldots,\W_n(t)\}$ is said to be collectively persistently exciting if there exists $T>0$ and an excitation level $\alpha > 0$ such that 
\[
\frac{1}{T} \sum_{i=1}^n \int_t^{t + T} \W_i\T(\tau) \W_i(\tau) \d \tau \ge \alpha \I \quad \forall t
\]
\end{definition}

The condition for $\{\Y_1(t),\ldots,\Y_n(t)\}$ to be collectively persistently exciting is a weaker condition than requiring any one $\Y_i(t)$ to be persistently exciting on its own. Through this insight, we state the main result for this section.

\begin{theorem}[Parameter Convergence of Multi-Robot Direct Adaptive Control with a Common Estimate] 
Consider the set of systems \eqref{eq:systems} with the control laws \eqref{eq:controlLaw} and centralized parameter update law \eqref{eq:update2}. Then, if the regressors $\{\Y_1,\ldots,\Y_n\}$ are collectively persistently exciting, the parameter errors $\at$ satisfy $\at\rightarrow \mathbf{0}$ as $t\rightarrow \infty$.
\end{theorem}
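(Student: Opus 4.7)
The approach is to reduce the multi-robot system to the structural form already handled by Theorem 1 in the single-robot setting. Stacking the tracking signals $\s = [\s_1\T,\cdots,\s_n\T]\T$ and the regressors $\Y_{\!R} = [\Y_1\T,\cdots,\Y_n\T]\T$ as the authors already do, the aggregated closed-loop system
\[
\H\,\dot{\s} + (\C + \K_D)\s = \Y_{\!R}\,\at, \qquad \dot{\ah} = -\P\,\Y_{\!R}\T \s
\]
is formally identical to the classical direct adaptive scheme. The crucial algebraic observation is that
\[
\Y_{\!R}\T(\tau)\,\Y_{\!R}(\tau) \;=\; \sum_{i=1}^n \Y_i\T(\tau)\,\Y_i(\tau),
\]
so collective persistency of excitation of $\{\Y_1,\ldots,\Y_n\}$ is, by definition, ordinary persistency of excitation of the stacked regressor $\Y_{\!R}$.

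The first step is the Lyapunov argument already supplied: using $V = \tfrac{1}{2}\at\T \P^{-1}\at + \tfrac{1}{2}\sum_i \s_i\T \H_i \s_i$ with $\dot V = -\s\T \K_D \s \le 0$, conclude $\s \in \mathcal{L}_2 \cap \mathcal{L}_\infty$ and $\at \in \mathcal{L}_\infty$. Next, I would show that the signals have enough regularity to apply Barbalat's lemma: the per-robot assumptions (bounded $\q_{i,d},\dot{\q}_{i,d},\ddot{\q}_{i,d}$ with $\ddot{\q}_{i,d}$ uniformly continuous) propagate through the block-diagonal construction to ensure uniform continuity of $\s\T \K_D \s$. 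Then $\s_i \to \mathbf{0}$ for each $i$, and consequently the update law gives $\dot{\ah} \to \mathbf{0}$.

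For parameter convergence, I would invoke the same argument as in the proof of Theorem 1 applied to the stacked system. Because $\Y_{\!R}$ inherits uniform continuity from the per-robot regressors and satisfies the standard PE definition by the identity above, the classical exponential-type argument from \cite{SlotineLi87c} yields $\at \to \mathbf{0}$ as $t \to \infty$. No part of the single-robot proof needs to be rewritten; it applies verbatim once the row-stacking is performed.

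The main obstacle is only conceptual, not technical: one must recognize that the definition of collective PE has been crafted precisely so that the sum $\sum_i \Y_i\T \Y_i$ equals $\Y_{\!R}\T \Y_{\!R}$, collapsing the teaming hypothesis to the familiar single-system condition, even though no individual $\Y_i$ need be PE on its own. A secondary subtlety is that the common parameter error $\at$ (as opposed to per-robot parameter errors) is what makes this collapse legitimate, since it allows a single Lyapunov term $\tfrac{1}{2}\at\T \P^{-1}\at$ to couple all $n$ robots through the shared estimate; this is exactly what the centralized update law \eqref{eq:update2} enforces. Beyond these identifications the argument is mechanical.
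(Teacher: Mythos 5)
Your proposal is correct and follows essentially the same route as the paper: both proofs rest on the identity $\Y_{\!R}\T\Y_{\!R} = \sum_{i=1}^n \Y_i\T\Y_i$, which shows that collective persistency of excitation of $\{\Y_1,\ldots,\Y_n\}$ is exactly persistency of excitation of the stacked regressor $\Y_{\!R}$, after which the single-robot convergence result applies verbatim to the aggregated system. The additional detail you supply on the Lyapunov and Barbalat steps is consistent with the paper's (more terse) argument and does not change the approach.
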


\begin{proof}
Suppose that the regressors $\{\Y_1,\ldots,\Y_n\}$ are collectively persistently exciting. Then there exits $\alpha>0$, $T>0$ such that 
\[
\frac{1}{T}\sum_{i=1}^n \int_t^{t + T} \Y_i\T(\tau) \Y_i(\tau) \d \tau \ge \alpha \I \quad \forall t
\]
which is equivalent to 
\[
\frac{1}{T} \int_t^{t + T} \Y_{\!R}\T(\tau) \Y_{\!R}(\tau) \d \tau \ge \alpha \I \quad \forall t
\]
Thus, $\Y_{\!R}$ is persistently exciting, and it follows that $\at\rightarrow \mathbf{0}$ as $t\rightarrow \infty$.
\end{proof}

\begin{remark}
Following comments as in \cite{SlotineLi87c}, since each $\q_i \rightarrow \q_{i,d}$, collective sufficient richness of the regressors $\Y_i( \qdd_{i,r}, \qd_{i,r}, \qd_i, \q_i)$ is equivalent to collective sufficient richness of the desired trajectories $\Y_i(\qdd_{i,d}, \qd_{i,d}, \qd_{i,d},\q_{i,d})$. 
\end{remark}

\renewcommand{\b}{\mathbf{b}}
\begin{remark}
The previous development can be generalized to the case where individual robots also adapt their own parameters that may be different from robot to robot. Let $\b_i$ represent unknown dynamic parameters specific to robot $i$. Then, the dynamics for $\s_i$ follow similar to \eqref{eq:sdyn} as
\[
\H_i \dot{\s}_i+ ( \C_i + \K_D) \s_i = \Y_i \at + \mathbf{Z}_i \tilde{\b}_i
\]
where $\mathbf{Z}_i$ is a suitably defined regressor for robot-specific parameters. Letting $\Q_i$ a positive definite adaptation gain for robot $i$, the update laws
\begin{align}
\dot{\ah} = -\P \sum_{i=1}^n \Y_i\T \s_i \quad  {\rm and} \quad 
\dot{\hat{\b}}_i=  -\Q_i \mathbf{Z}_i\T \s_i \nonumber
\end{align}
can be shown to result in $\s_i\rightarrow \mathbf{0} $ as $t\rightarrow\infty$ using the Lyapunov function
\[
V =\frac{1}{2} \at\T \P^{-1} \at + \frac{1}{2} \sum_{i=1}^n \left( \s_i\T \H_i \s_i + \tilde{\b}_i\T \Q_i^{-1} \tilde{\b}_i \right)
\]
This general result holds analogously for the remainder of the developments in the paper. However, focus is placed simply on load parameters $\a \in \mathbb{R}^{10}$ for clarity.  
\end{remark}


\section{Collective Sufficient Richness for Networked Update Laws}

\label{sec:networked}

Although the previous section showed the benefits of teaming toward parameter convergence, the assumption of a centralized update law may be impractical. Instead, any updates in practice will occur through delayed communication channels, requiring each robot to operate with their own estimates in addition to time-delayed information from the cloud. We continue to build toward this case, first assuming decentralized updates without delays. 

More specifically, suppose that instead of a shared parameter vector, all robots instead carry along their own estimates $\hat{\a}_i$ and update these estimates through coupling. For simplicity, all-to-all coupling is considered, however, the arguments that follow can be generalized to any fully-connected network topology. In the case of all-to-all coupling consider the system dynamics
\begin{align}
\H_i \dot{\s}_i &+ (\C_i + \K_{D_i})\s_i = \Y_i \at_i \label{eq:systems3}\\
\dot{\hat{\a}}_i &= -\P_i\left[ \Y_i\T \s_i - \frac{\K}{n} \sum_{j=1}^n ( \ah_j - \ah_i) \right] \label{eq:updateLaw3}
\end{align} 
where $\K$ a constant symmetric positive definite gain matrix. 

Algebraically, as in so-called quorum sensing \cite{Tabareau10,Russo10}, this update law \eqref{eq:updateLaw3} does not require communication between all robots, but rather simply requires a common average $\hat{\overline{\a}} = \frac{1}{N} \sum_{i=1}^N \ah_i$ to be communicated to the group (through the server or the environment), with an equivalent update
\[
\dot{\hat{\a}}_i = -\P_i \left [\Y_i\T \s_i - \K (\hat{\overline{\a}} - \ah_i) \right]
\]
Note that this common average is different from the common parameter vector in Section \ref{sec:synchronous} since each system maintains its own estimate $\ah_i$ in this case.

Defining the following system-wide quantities
\begin{align}
\P &= {\rm diag}(\P_1, \ldots, \P_n) \nonumber &
\ah &= [\ah_1\T,\ldots,\ah_n\T]\T \nonumber  
 \\
\Y_{\!B} &= {\rm diag}(\Y_1, \ldots, \Y_n)\nonumber 
\end{align}
the overall networked dynamics can be written as 
\begin{align}
\H\, \dot{\s} &+ (\C + \K_D)\s = \Y_{\!B} \at \nonumber \\
\dot{\hat{\a}} &= -\P \left[ \Y_{\!B}\T \s + \L_\K \at \right] \nonumber
\end{align}
where 
\begingroup\makeatletter\def\f@size{9}\check@mathfonts
\begin{equation}
\L_\K = \frac{1}{n}\begin{bmatrix} n \K & -\K & \cdots & -\K \\
						-\K  &  n\K & \ddots & \vdots \\
						\vdots & \ddots & \ddots & -\K \\
						-\K & \cdots & -\K & n \K \end{bmatrix} \nonumber
\end{equation}
\begingroup\makeatletter\def\f@size{10}\check@mathfonts
is a block Laplacian matrix \cite{Slotine05}. It can be shown that $\L_\K$ is positive semi-definite, and $\at\T \L_\K\, \at = {\mathbf{0}}$ if and only if $\ah_i = \ah_j$ for all pairs $i$ and $j$ \cite{Slotine05}. These properties play a key role in the convergence analysis for this case.

\subsection{Analysis}
More formally, the Lyapunov function
\begin{align}
V &= \sum_{i=1}^N \frac{1}{2} \left(\s_i\T \H_i \s_i + \at_i\T\P_i^{-1} \at_i \right) \label{eq:lyapfullconnect}
\end{align}
has rate of change
\begin{equation}
\dot{V} = -\s\T \K_D \s - \at\T \L_\K \at
\label{eq:LyapNetworkRate}
\end{equation}
Since it can be shown that $\ddot{V}$ is bounded, it follows from Barbalat's Lemma that $\s\rightarrow\mathbf{0}$ and $\at\T \L_\K \at \rightarrow \mathbf{0}$ as $t\rightarrow \infty$, and thus each $\ah_i(t)$ asymptotically converges to the common average $\hat{\overline\a}(t)$ as well.

\subsection{Collective Sufficient Richness}
However, we see something deeper here, in that networked adaptation is afforded the same benefits of parameter excitation being determined collectively, despite the decentralized update law. Intuitively, as $\K$ becomes large, \eqref{eq:updateLaw3} enforces that each $\ah_i \approx \ah_j$, approaching the centralized update law in the limit. 

\begin{theorem}[Parameter Convergence of Multi-Robot Direct Adaptive Control with a Decentralized Update Law] Consider the system \eqref{eq:systems3} with decentralized update laws \eqref{eq:updateLaw3}. Then, if the regressors $\{\Y_1,\ldots,\Y_n\}$ are collectively persistently exciting, the parameter errors $\at_i$ all satisfy $\at_i\rightarrow \mathbf{0}$ as $t\rightarrow \infty$.
\label{thm:decentralized}
\end{theorem}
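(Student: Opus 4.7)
The plan is to leverage the Lyapunov analysis already presented, then chain three ingredients: (i) vanishing tracking errors together with consensus of the per-agent estimates to a common value, (ii) vanishing of $\Y_i \at_i$ for each agent via the closed-loop dynamics, and (iii) an application of the collective persistency hypothesis to conclude that the consensus value of the parameter error is itself zero.

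First, I would tighten the Barbalat argument gestured at in the excerpt. From \eqref{eq:LyapNetworkRate}, $V$ is bounded below and nonincreasing, hence converges, so $\int_0^\infty \dot{V}\,dt$ is finite. Under standard trajectory boundedness assumptions, $\ddot V$ is bounded, making $\dot V$ uniformly continuous, so Barbalat's Lemma yields $\dot V \rightarrow 0$. Since the two contributions in \eqref{eq:LyapNetworkRate} are individually nonpositive, this forces $\s_i \rightarrow \mathbf{0}$ for every $i$ together with $\at\T \L_\K \at \rightarrow 0$. The latter, combined with the null-space structure of the block Laplacian $\L_\K$, implies $\ah_i - \bar{\ah} \rightarrow \mathbf{0}$ for every $i$, where $\bar{\ah}(t) := \frac{1}{n}\sum_j \ah_j(t)$; equivalently, writing $\bar{\at} := \bar{\ah} - \a$, one has $\at_i - \bar{\at} \rightarrow \mathbf{0}$.

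Next, I would upgrade $\s_i \rightarrow \mathbf{0}$ to $\dot{\s}_i \rightarrow \mathbf{0}$. Differentiating the closed-loop dynamics once more, boundedness of $\q_i$, $\qd_i$, the reference derivatives, $\at_i$, and $\dot{\at}_i$ makes $\ddot{\s}_i$ bounded, so $\dot{\s}_i$ is uniformly continuous; combined with $\int_0^\infty \dot{\s}_i\,dt = -\s_i(0)$, Barbalat delivers $\dot{\s}_i \rightarrow \mathbf{0}$. Substituting into \eqref{eq:systems3} then yields $\Y_i \at_i \rightarrow \mathbf{0}$ for each $i$. Since $\Y_i$ is bounded and $\at_i - \bar{\at} \rightarrow \mathbf{0}$, this rewrites as $\Y_i \bar{\at} \rightarrow \mathbf{0}$, and stacking over agents gives $\Y_{\!R} \bar{\at} \rightarrow \mathbf{0}$.

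The main obstacle is the final step: upgrading $\Y_{\!R} \bar{\at} \rightarrow \mathbf{0}$ to $\bar{\at} \rightarrow \mathbf{0}$ under the collective persistency hypothesis (equivalent, by the same identity used inside the proof of Theorem 2, to ordinary persistency of excitation of $\Y_{\!R}$). A slowly-varying PE argument will do the job, but first requires establishing $\dot{\bar{\at}} \rightarrow \mathbf{0}$: averaging \eqref{eq:updateLaw3} over $i$, every surviving term carries a factor of either $\s_i$ or $\ah_j - \ah_i$, both of which vanish. Then, for any window $[t, t+T]$ with $t$ large, $\bar{\at}(\tau) \approx \bar{\at}(t)$ uniformly on the window; expanding $\int_t^{t+T} \|\Y_{\!R}(\tau) \bar{\at}(\tau)\|^2 \, d\tau$ around $\bar{\at}(t)$ and invoking the collective PE lower bound produces a leading term $\alpha T \|\bar{\at}(t)\|^2$ whose perturbation corrections shrink uniformly as $t \rightarrow \infty$. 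This forces $\bar{\at}(t) \rightarrow \mathbf{0}$, and combined with the consensus conclusion $\at_i - \bar{\at} \rightarrow \mathbf{0}$ yields $\at_i \rightarrow \mathbf{0}$ for every $i$.
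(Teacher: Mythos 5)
Your overall architecture matches the paper's: Barbalat on the networked Lyapunov function gives $\s\rightarrow\mathbf{0}$ and $\at\T\L_\K\at\rightarrow 0$, hence consensus $\at_i-\bar{\at}\rightarrow\mathbf{0}$; the excitation hypothesis is then transferred to the common error through the row-stacked regressor $\Y_{\!R}$; and your closing ``slowly-varying PE'' computation is essentially a self-contained rendering of the classical lemma the paper simply cites from \cite{SlotineLi87c}. The one step that does not go through as written is the second Barbalat application. To claim $\ddot{\s}_i$ is bounded you must differentiate $\Y_i(\qdd_{i,r},\qd_{i,r},\qd_i,\q_i)\,\at_i$ in time, and $\dot{\Y}_i$ involves $\dddot{\q}_{i,r}=\dddot{\q}_{i,d}-\Llambda\,\qtdd_i$. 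The standing assumptions give only that $\qdd_{i,d}$ is bounded and uniformly continuous, so $\Y_i$ need not even be differentiable in $t$; consequently the pointwise conclusions $\dot{\s}_i\rightarrow\mathbf{0}$ and $\Y_i\at_i\rightarrow\mathbf{0}$ are not justified under the stated hypotheses.

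The standard repair --- and the route the paper takes --- is to avoid the pointwise statement entirely and work with a windowed integral of the closed-loop dynamics: integrating $\H_i\dot{\s}_i+(\C_i+\K_{D_i})\s_i=\Y_i\at_i$ over $[t_2,t_2+T]$ and integrating the first term by parts gives
\begin{equation*}
\int_{t_2}^{t_2+T}\Y_i\at_i\,\d t=\Bigl[\H_i\s_i\Bigr]_{t_2}^{t_2+T}+\int_{t_2}^{t_2+T}\bigl(\C_i+\K_{D_i}-\dot{\H}_i\bigr)\s_i\,\d t\;\rightarrow\;\mathbf{0},
\end{equation*}
which needs only $\s_i\rightarrow\mathbf{0}$ and boundedness of $\qd_i$. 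Combined with $\dot{\at}\rightarrow\mathbf{0}$ (which you already establish, to freeze the error at $t_2$) and with consensus (to replace the stacked $\at$ by the common $\bar{\at}$), this yields $\|\int_{t_2}^{t_2+T}\Y_{\!R}(t)\,\bar{\at}(t_2)\,\d t\|\rightarrow 0$, which is exactly the paper's intermediate claim. Be aware that passing from the vanishing of this vector-valued integral to the quadratic-form lower bound $\alpha T\|\bar{\at}(t_2)\|^2$ is not a one-line Cauchy--Schwarz step --- it uses uniform continuity of $\Y_{\!R}$ and is the real content of the cited lemma; your version with the squared integrand is cleaner but is only available if the pointwise convergence $\Y_{\!R}\bar{\at}\rightarrow\mathbf{0}$ has been secured. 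Everything else in your proposal (the consensus step, the averaging argument for $\dot{\bar{\at}}\rightarrow\mathbf{0}$, and the final perturbation estimate) is sound.
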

\begin{proof}
Since $\s \rightarrow \mathbf{0}$, it can be shown \cite{SlotineLi87c} that for any fixed $T>0$
\[
\left\| \int_{t_2}^{t_2+T} \Y_B(t) \at(t_2) \d t \right\| \rightarrow 0  {\textrm{ as }} t_2\rightarrow \infty
\]
Yet, since each $\ah_i \rightarrow \hat{\overline{\a}}$, it follows that
\[
\left\| \int_{t_2}^{t_2+T} \Y_R(t) ( \hat{\overline{\a}}(t_2) - \a) \d t \right\| \rightarrow 0  {\textrm{ as }} t_2\rightarrow \infty
\]
From the proof of Theorem 2, $\Y_R$ persistently exciting is equivalent to collective persistency of excitation. From \cite{SlotineLi87c}, this implies $\|\hat{\overline{\a}}(t)- \a\| \rightarrow 0$, and thus, all $\ah_i \rightarrow \a$. 
\end{proof}

As a key benefit, recall again that the condition of collective sufficient richness is a much weaker condition than a condition that any individual $\Y_i$ is persistently exciting.  In summary, despite the decentralized law, parameter convergence is afforded the same excitation benefits through teaming as in the centralized case.

\subsection{Time-varying network topology}
The results of the previous subsection also hold in the case when the communication topology may be time varying. Such a case may occur in the event of short-term communication outages, or when information is shared only though local neighbors. At each instant, consider bidirectional communication between robot $i$ and a set of neighbors $\mathcal{N}_i(t)$. The following update law is proposed
\begin{align}
\dot{\hat{\a}}_i &= -\P_i \left[\Y_i\T \s_i -  \sum_{j \in \mathcal{N}_i(t)} \K_{ij} ( \ah_j - \ah_i)\right] \label{eq:vary_update}
\end{align} 
where each $\K_{ij} = \K_{ji}$ a symmetric positive definite coupling gain. In \cite{Wang17}, a method was recently proposed for tracking consensus in multiple robots under time-varying topologies, in effect extending the fixed topology results of \cite{Chung09}. Inspired by the analysis in \cite{Wang17}, we show that parameter consensus emerges under similar assumptions on the network structure. Further, collective persistency of excitation guarantees this consensus converges to the true parameters.

\begin{theorem}[Parameter Convergence of Multi-Robot Direct Adaptive Control with Time-Varying Networks] Consider the system \eqref{eq:systems3} with update law \eqref{eq:vary_update}. Suppose there exists a infinite set of network switching times $t_1 < t_2 < \ldots$ such that $t_{j+1}-t_j > t_d$ for some $t_d>0$ and $\forall j=1,2,\ldots$. Further suppose there exist an infinite number of uniformly bounded intervals $[t_{i_k}, t_{i_{k+1}})$ such that the union of the communication graphs over each interval are fully connected. Then, if the regressors $\{\Y_1,\ldots,\Y_n\}$ are collectively persistently exciting, the parameter errors all satisfy $\at_i\rightarrow \mathbf{0}$ as $t\rightarrow \infty$.
\end{theorem}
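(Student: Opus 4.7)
The plan is to mirror the structure of the proof of Theorem~\ref{thm:decentralized}: first use a Lyapunov argument to obtain $\s\to\mathbf{0}$ together with $\mathcal{L}_1$-integrability of the consensus term, then invoke a connectivity-over-time argument to upgrade that integrability to asymptotic parameter consensus under switching, and finally apply collective persistency of excitation to close the loop. Concretely, I would start from the same Lyapunov candidate
\[
V = \sum_{i=1}^n \half\left(\s_i\T \H_i \s_i + \at_i\T \P_i^{-1} \at_i\right),
\]
whose derivative between switching times takes the form
\[
\dot V = -\s\T \K_D \s - \at\T \L_\K(t)\, \at,
\]
where $\L_\K(t)$ is the block Laplacian assembled from the symmetric gains $\{\K_{ij}\}$ over the edges of the instantaneous graph. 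Each $\L_\K(t)$ is positive semi-definite, and $V$ is continuous across switches, so $V$ is non-increasing on all of $[0,\infty)$. This immediately yields $\s,\at\in\mathcal{L}_\infty$, $\s\in\mathcal{L}_2$, and $\int_0^\infty \at\T\L_\K(t)\,\at\, \d t < \infty$. Boundedness of $\ddot V$ between switches together with Barbalat's Lemma then gives $\s\to\mathbf{0}$, exactly as in the fixed-topology case.

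I expect the hardest step to be promoting the $\mathcal{L}_1$ bound on $\at\T\L_\K(t)\,\at$ into true consensus $\ah_i-\ah_j\to\mathbf{0}$, since that integral controls only the edges that happen to be active at each instant. Here I would adapt the strategy of \cite{Wang17}: the dwell-time hypothesis $t_{j+1}-t_j > t_d$ rules out chattering, and the union-connectivity hypothesis says that on each window $[t_{i_k},t_{i_{k+1}})$ any two agents are linked by a finite chain of edges, each active on at least one subinterval of length $\geq t_d$. Combining these with uniform boundedness of $\dot{\at}$ (which follows from $\s,\at\in\mathcal{L}_\infty$ and boundedness of $\Y_i$) lets one chain together per-edge estimates of the form $\int \K_{ij}\|\ah_i-\ah_j\|^2\,\d t$, with the short inactive gaps controlled by a Lipschitz-type bound on $\at$, to conclude $\ah_i-\ah_j\to\mathbf{0}$ for every pair, and hence $\ah_i\to\hat{\overline{\a}}$ as $t\to\infty$.

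With $\s\to\mathbf{0}$ and $\ah_i\to\hat{\overline{\a}}$ established, the remainder repeats the closing paragraph of the proof of Theorem~\ref{thm:decentralized} almost verbatim: $\s\to\mathbf{0}$ implies $\bigl\|\int_{t_2}^{t_2+T}\Y_{\!B}(t)\,\at(t_2)\,\d t\bigr\|\to 0$; asymptotic parameter consensus lets one replace the block-diagonal action $\Y_{\!B}\at$ by $\Y_{\!R}(\hat{\overline{\a}}-\a)$; and collective persistency of $\{\Y_1,\ldots,\Y_n\}$, which by Theorem~2 is equivalent to persistent excitation of $\Y_{\!R}$, forces $\hat{\overline{\a}}\to\a$. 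Every $\at_i$ therefore converges to zero, as claimed.
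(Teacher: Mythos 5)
Your overall architecture correctly brackets the paper's proof at both ends: the Lyapunov step giving $\s\in\mathcal{L}_2$ and $\s\to\mathbf{0}$, and the closing appeal to the proof of Theorem~\ref{thm:decentralized}, are exactly what the paper does. Where you diverge is the middle (consensus) step. The paper does not try to upgrade the $\mathcal{L}_1$ bound on $\at\T\L_\K(t)\,\at$ directly; it instead isolates the unforced synchronization dynamics $\dot{\boldsymbol{\xi}}=\A(t)\boldsymbol{\xi}$ for $\boldsymbol{\xi}=[\at_2-\at_1,\ldots,\at_n-\at_{n-1}]$, uses the dwell-time and joint-connectivity hypotheses (via \cite{Wang17}) to conclude that this switched linear time-varying system is uniformly exponentially stable, and then treats $\mathbf{d}_i=-\P_i\Y_i\T\s_i$ as an $\mathcal{L}_2$ disturbance, so that $\boldsymbol{\xi}\in\mathcal{L}_2\cap\mathcal{L}_\infty$ by input-output stability \cite{Desoer75} and Barbalat gives $\boldsymbol{\xi}\to\mathbf{0}$.

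The concrete gap in your alternative route is the bridging device: a uniform (Lipschitz) bound on $\dot{\at}$ cannot control the drift between the instants at which different edges of the connecting path are active. Your per-edge estimate yields, for each edge $\{v_l,v_{l+1}\}$ on the path, only some instant $\tau_l$ in its activation subinterval at which $\|\ah_{v_l}(\tau_l)-\ah_{v_{l+1}}(\tau_l)\|$ is small; to chain these you must compare them at a common time $t$ in the window $W_k=[t_{i_k},t_{i_{k+1}})$, and the correction is of order $\|\dot{\at}\|_\infty\,|t-\tau_l|$, which over a window of fixed length is bounded but does \emph{not} vanish as $k\to\infty$. You also cannot fall back on $\dot{\at}\to\mathbf{0}$, since that would require the coupling term to vanish, which is precisely what you are trying to establish. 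The step is repairable: from $\s\in\mathcal{L}_2$ and $\int_0^\infty\at\T\L_\K(t)\,\at\,\d t<\infty$ one obtains $\dot{\at}\in\mathcal{L}_2$, hence by Cauchy--Schwarz the drift over $W_k$ is at most $\sqrt{T_{\max}}\bigl(\int_{W_k}\|\dot{\at}\|^2\,\d t\bigr)^{1/2}\to 0$, and your chaining argument then closes. Either supply that estimate explicitly, or adopt the paper's perturbed uniformly-exponentially-stable LTV argument, which packages the same difficulty into a standard stability result.
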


\begin{proof}
\newcommand{\dd}{\mathbf{d}}
Again, the Lyapunov function \eqref{eq:lyapfullconnect} satisfies 
\[
\dot{V} \le -\s\T \K_D \s
\]
and thus it can be shown that $\s\in \mathcal{L}_2$ and $\s\rightarrow \mathbf{0}$ as $t\rightarrow\infty$. The parameter error dynamics thus take the form
\begin{align}
\dot{\at}_i &= {\mathbf d}_i +\P_i \sum_{j \in \mathcal{N}_i(t)} \K_{ij} ( \at_j - \at_i) \nonumber
\end{align} 
where ${\mathbf d}_i = -\P_i \Y_i\T \s_i$ also satisfies ${\mathbf d}_i\in \mathcal{L}_2$ and ${\mathbf d}_i \rightarrow \mathbf{0}$ as $t\rightarrow \infty$. In the case when each $\mathbf{d}_i = {\mathbf{0}}$, it can be shown that the synchronization error vector $\boldsymbol{\xi} = [\at_2 - \at_1, \ldots, \at_n- \at_{n-1}]$  evolves via a uniformly exponentially stable linear time-varying system
\begin{equation}
\dot{\boldsymbol{\xi}} = \A(t) \boldsymbol{\xi}
\label{eq:unforced}
\end{equation} 
similar to \cite{Wang17}. Reconsidering the tracking errors $\s_i$ as providing an input to this system via $\mathbf{d}_{\boldsymbol{\xi}} = [\dd_2-\dd_1, \ldots, \dd_n - \dd_{n-1}]$
\begin{equation}
\dot{\boldsymbol{\xi}} =  {\A}(t) \boldsymbol{\xi} + \dd_{\boldsymbol{\xi}}
\label{eq:force}
\end{equation}
Uniform exponential stability of \eqref{eq:unforced} can be used to show both the $\mathcal{L}_2$ and $\mathcal{L}_\infty$ stability of \eqref{eq:force} \cite{Desoer75}. Therefore, the synchronization errors satisfy $\boldsymbol{\xi} \in \mathcal{L}_2 \cap \mathcal{L}_\infty$. Barbalat's lemma then immediately provides $\boldsymbol{\xi}\rightarrow \mathbf{0}$ as $t\rightarrow \infty$. With synchronization guaranteed in the limit, the proof argument of Theorem \ref{thm:decentralized} provides $\at_i\rightarrow \mathbf{0}$ for all estimates. 
\end{proof}


\section{Collective Sufficient Richness With Time Delays}
\label{sec:delay}
This section treats the case of cloud-based teaming under communication delays. As an initial step, we consider the case where  each robot $i$ has bidirectional communication with a fixed set of neighbors $\mathcal{N}_i$. Communication from robot $i$ to robot $j \in \mathcal{N}_i$ occurs with fixed delay $T_{ij}$. It is assumed that the communication topology is fully connected. Extension to the case of time-varying communication delay and combinations with time-varying communication topologies remain topics for future analysis. In the current case, the updates are chosen to take the form
\begin{align}
\H_i \,\dot{\s}_i &+ (\C_i + \K_{D_i})\,\s_i = \Y_i \, \at_i \label{eq:delay_sys} \\
\dot{\hat{\a}}_i &= -\P_i \left[\Y_i\T \s_i -  \sum_{j \in \mathcal{N}_i} \K_{ij}( \ah_j(t-T_{ji}) - \ah_i)\right] \label{eq:delay_update}
\end{align} 
It would seem reasonable that such delays would disrupt parameter convergence and even perhaps may interfere with tracking. However, we will show that the proposed update laws are robust to time-delayed communications.

\subsection{Analysis}
Let $\G_{ij}=\G_{ji}$ such that $\K_{ij} = \G_{ij} \G_{ij}\T$. Similar to \cite{Wang06} we consider the updates through an alternative notation, inspired by the wave variable formulation \cite{Niemeyer91b} of transmission passivity \cite{Spong89c,Chopra08}. Let 
\begin{align}
 \v_{ij} &= \G_{ji}\T \ah_i & \u_{ij} = \v_{ij}(t-T_{ij})  \nonumber
\end{align}
where the subscript $ij$ indicates communication from $i$ to $j$, and define 
\begin{align}
\btau_{ji} = \u_{ji} - \v_{ij} \label{eq:tauji}	
\end{align}
When $\K_{ij}=\I$, $\btau_{ji}=\ah_j(t-T_{ji}) - \ah_i$ represents the parameter difference following a time-delayed communication of $\ah_{j}$. When $\K_{ij}$ is chosen otherwise, $\btau_{ji}$ is roughly a scaled version of this quantity. 
The update laws \eqref{eq:delay_update} can be alternately expressed as
\begin{equation}
\dot{\hat{\a}}_i = -\P_i \left[ \Y_i\T \s_i -  \sum_{j\in \mathcal{N}_i} \G_{ji} \btau_{ji} \right]
\label{eq:tij}
\end{equation} 
Consider the modified Lyapunov function $V=\sum V_i$ with
\[
V_i =\frac{1}{2} \left[ \s_i\T \H_i \s_i + \at_i\T \P_i^{-1} \at_i + \sum_{j \in \mathcal{N}_i} \int_{t-T_{ji}}\T \v_{ji}\T \v_{ji} {\rm d}t \right]
\]
Intuitively, the integral term accounts for the parameter errors stored in the local transmission channels.

The rate of change of each term is then
%
%
\begin{align}
\dot{V}_i &=  -\s_i\T \K_D \s_i + \!\sum_{j\in \mathcal{N}_i}\left[ \at_i\T \G_{ji} \btau_{ji} +\frac{1}{2}( \v_{ji}\T \v_{ji} - \u_{ji}\T \u_{ji}) \right]\nonumber\\
		&=  -\s_i\T \K_D \s_i + \sum_{j\in \mathcal{N}_i}\left[ \v_{ij}\T \btau_{ji} + \frac{1}{2}( \v_{ji}\T \v_{ji} - \u_{ji}\T \u_{ji}) \right] \nonumber
\end{align}

Noting that each $\u_{ji}\T \u_{ji}$ can be expressed via \eqref{eq:tauji} as
\begin{align}
\u_{ji}\T \u_{ji} = \v_{ij}\T \v_{ij} + 2 \v_{ij}\T \btau_{ji} + \btau_{ji}\T \btau_{ji}  \nonumber
\end{align}
It follows that
\begin{align}
\dot{V} &= \sum_i \left[-\s_i\T \K_D \s_i + \sum_{j\in \mathcal{N}_i} \frac{1}{2} ( \v_{ji}\T \v_{ji} - \v_{ij}\T \v_{ij} - \btau_{ji}\T\btau_{ji}) \right]\nonumber\\
		&= \sum_i \left[-\s_i\T \K_D \s_i - \sum_{j\in \mathcal{N}_i} \frac{1}{2} \btau_{ji}\T\btau_{ji} \right]\nonumber
\end{align}
Thus each $\s_i \rightarrow \mathbf{0}$ and $\btau_{ji} \rightarrow \mathbf{0}$ as $t \rightarrow \infty$. From \eqref{eq:tij} it can be seen that each $\dot{\ah}_i \rightarrow \mathbf{0}$ asymptotically. This does not necessarily mean that each $\ah_i$ converges in general. However, here, since
\begin{align}
\u_{ji}(t) &= \v_{ij}(t) + \btau_{ji}(t) = \v_{ji}(t-T_{ji}), \quad \rm{and} \nonumber \\
\u_{ij}(t) &= \v_{ji}(t) + \btau_{ij}(t) = \v_{ij}(t-T_{ij}) \nonumber
\end{align}
it follows that each $\v_{ij}$ converges to a trajectory with period $T_{ij}+T_{ji}$. Yet, since $\dot{\v}_{ij}=\G_{ji}\T \dot{\ah}_i$ and $\dot{\ah}\rightarrow \mathbf{0}$, the only possibility for this periodic trajectory is that each $\v_{ij}$ and $\ah_i$ converge to constant values. Further, since each $\btau_{ij}\rightarrow \mathbf{0}$, $\ah_i \rightarrow \ah_j$ for every pair of neighbors. Thus, all the estimates $\ah_i$ must converge to some fixed common constant $\overline{\a}$.

\begin{theorem}[Parameter Convergence of Multi-Robot Direct Adaptive Control with Time-Delayed Decentralized Updates] Consider the system \eqref{eq:delay_sys} with decentralized update laws \eqref{eq:delay_update}. Then, if the regressors $\{\Y_1,\ldots,\Y_n\}$ are collectively persistently exciting, the parameter errors $\at_i$ all satisfy $\at_i\rightarrow\mathbf{0}$ as $t\rightarrow \infty$.
\end{theorem}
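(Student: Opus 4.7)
The plan is to leverage the heavy lifting already done in the Analysis subsection, which establishes $\s_i\to\mathbf{0}$ and $\btau_{ji}\to\mathbf{0}$, and crucially that all estimates $\ah_i$ converge to a common constant $\overline{\a}$. Given these three facts, the remaining task is simply to identify $\overline{\a}$ with the true parameter vector $\a$, which can be done by an almost verbatim replay of the proof of Theorem \ref{thm:decentralized}.

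First I would translate $\s_i\to\mathbf{0}$ into an integral condition on the regressors. Following the argument from \cite{SlotineLi87c} used in the proof of Theorem \ref{thm:decentralized}, since each $\s_i\to\mathbf{0}$ with bounded closed-loop dynamics \eqref{eq:delay_sys}, one has for any fixed $T>0$
\[
\left\| \int_{t_2}^{t_2+T} \Y_i(t)\,\at_i(t_2)\,{\rm d}t \right\| \to 0 \quad \text{as } t_2\to\infty.
\]
The synchronization conclusion from the Analysis gives $\ah_i(t_2)\to\overline{\a}$, hence $\at_i(t_2)\to\overline{\a}-\a$. Summing the displayed limit over $i$ and using this substitution produces
\[
\left\| \int_{t_2}^{t_2+T} \Y_R(t)\,(\overline{\a}-\a)\,{\rm d}t \right\| \to 0 \quad \text{as } t_2\to\infty.
\]
By the equivalence observed in the proof of Theorem 2, collective persistency of excitation of $\{\Y_1,\ldots,\Y_n\}$ is the same as persistency of excitation of $\Y_R$, which together with the preceding limit and the classical identifiability argument of \cite{SlotineLi87c} forces $\overline{\a}-\a=\mathbf{0}$, so each $\at_i\to\mathbf{0}$.

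The principal obstacle, namely securing synchronization $\ah_i\to\overline{\a}$ despite the communication delays, has already been dispatched in the Analysis via the wave-variable Lyapunov construction and the ensuing periodicity argument that forces $\dot{\ah}_i\to\mathbf{0}$ together with $\btau_{ji}\to\mathbf{0}$. Once that machinery is in place, the identification step here is essentially a direct replay of Theorem \ref{thm:decentralized}, with the only delicate point being the justification for replacing the time-varying $\at_i(t_2)$ by its limiting value inside the integral; this is handled by the boundedness and uniform continuity of $\Y_i$ along trajectories, exactly as in the standard argument of \cite{SlotineLi87c}.
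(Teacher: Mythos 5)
Your proposal is correct and follows essentially the same route as the paper: the paper's own proof simply invokes the synchronization result $\ah_i\rightarrow\overline{\a}$ established in the preceding Analysis subsection (via the wave-variable Lyapunov function and the periodicity argument) and then states that convergence follows by the identical argument used for Theorem~\ref{thm:decentralized}, which is precisely the integral-condition and persistency-of-excitation replay you spell out. Your added remark about justifying the substitution of $\at_i(t_2)$ by its limit inside the integral is a reasonable elaboration of a step the paper leaves implicit, but it does not constitute a different approach.
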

\begin{proof}
Since each $\ah_i \rightarrow \overline{\a}$ for some fixed common constant $\overline{\a}$, convergence of each $\at_i \rightarrow \mathbf{0}$ follows through identical arguments to the proof of Theorem \ref{thm:decentralized}.
\end{proof}

\begin{remark}
The system behavior in this case may be interpreted in terms of combination properties of contracting systems \cite{Slotine98}, \cite{Slotine05}, \cite{Wang06}. Both the uncoupled dynamics for $(\s, \at)$ and the additional time-delayed Laplacian coupling can be shown to be semi-contracting in a common, block diagonal  metric  $\M = {\rm diag}( \H, \P^{-1} )$,
making their parallel combination  semi-contracting.
\end{remark}

\vspace{-5px}
\section{Extensions via Composite Adaptation}
\vspace{-0px}
\label{sec:Composite}

Beyond using the tracking errors to drive adaptation, any modeling errors linear in the unknown parameters can be used to drive parameter update laws. Composite adaptive control is a method to increase the convergence rate of  direct adaptive control by fusing information related to these modeling errors. We address a number of potential inclusions of these composite updates to improve the performance of cloud-based networked adaptive control.
 
 \vspace{-0px}
\subsection{Composite Adaptive Control: Filtered Torques}
\vspace{-0px}
One possible addition is to filter the dynamics \cite{Slotine89}
\begin{align}
\frac{\lambda}{p + \lambda} \btau = \frac{\lambda}{p + \lambda}\left[ \H \qdd + \C \qd + \g \right]
\label{eq:filterA}
\end{align}
where $p$ the Laplace variable. Writing the right hand side as a convolution integral with the impulse response of the low-pass filter, integration by parts can be used to yield
\begin{align}
\frac{\lambda}{p + \lambda} \btau = \lambda \H \qd  - \frac{\lambda}{p + \lambda}\left[ \lambda \H \qd +\C\T\qd - \g \right]
\label{eq:filterB}
\end{align}
Since all entries of the right hand side are linear in $\a$, one can construct a matrix $\W$, dependent on the time histories of $\q$ and $\qd$, such that
\begin{align}
\frac{\lambda}{p + \lambda} \btau = \W(\q,\qd) \a \nonumber
\end{align} 
Letting the modeling error $\e= \W \ah - \lambda/(p+\lambda) \btau$, then
\begin{align}
\e = \W(\q,\qd) \at \nonumber
\end{align}
To explore the effects of including composite adaptation, consider again the fully-connected networked systems without time delay:
\begin{align}
\H_i \dot{\s}_i &+ (\C_i + \K_{D_i})\s_i = \Y_i \at_i \label{eq:systems32}\\
\dot{\hat{\a}}_i &= -\P_i\left[ \Y_i\T \s_i + \W_i\T \e_i- \frac{\K}{n} \sum_{j=1}^n ( \ah_j - \ah_i) \right] \label{eq:updateLaw32}
\end{align} 
Through using composite adaptation, the Lyapunov function \eqref{eq:lyapfullconnect} has rate of change 
\begin{align}
\dot{V} &= - \at\T \L_\K \at - \sum_{i=1}^n\left(\s_i\T \K_D \s_i + \at_i\T \W_i\T \e_i\right) \nonumber \\
		 &= - \at\T \L_\K \at - \sum_{i=1}^n\left(\s_i\T \K_D \s_i + \e_i\T \e_i\right) \nonumber 
\end{align}
which is more negative in comparison to the direct case \eqref{eq:LyapNetworkRate}. Note that the only property used here is $\e_i = \W_i \at_i$, and thus one can pursue other error measures linear in the parameters. In a sense, the terms $\e_i\T \e_i = \at_i\T \W_i\T \W_i \at_i$ can be viewed as adding positive semidefinite terms to the diagonal of the block Laplacian $\L_\K$, which may be positive definite on average in the case of persistency of excitation for each $\W_i$.


\begin{remark}
Implementation of this nonlinear strategy with sampled data will necessarily imply approximations. In discrete time implementations, one may consider a discrete low-pass filter applied directly in \eqref{eq:filterA}. 
\begin{equation}
\frac{1-\gamma}{1 - \gamma z^{-1}} \btau = \frac{1-\gamma}{1 - \gamma z^{-1}}  \left[ \H \qdd + \C \qd + \g \right] \nonumber
\end{equation}
where $z^{-1}$ is the unit delay operator and $\gamma \in (0,1)$ sets the filter bandwidth. Summation by parts can be applied to approximate the impulse response of these filters and avoid measurement of $\qdd$ via 
\begin{equation}
\frac{1-\gamma}{1 - \gamma z^{-1}} \btau = \beta \H \qd - \frac{1-\gamma}{1 - \gamma z^{-1}} \left[ \beta \H \qd +\C\T\qd - \g \right]
\label{eq:lowpassdiscrete}
\end{equation}
where $\beta = (1-\gamma)\gamma^{-1} T^{-1}$ for a sample time $T$. The scaling factor $\beta$ on the generalized momentum $\H \qd$ depends on both cutoff frequency and sample time. This accounts for a fully discrete implementation of the filter, unlike a time discretization of \eqref{eq:filterB} where the scaling of the generalized momentum is only frequency dependent.
\end{remark}

\vspace{-0px}
\subsection{Composite Adaptive Control: Filtered Energy}
\vspace{-0px}
Another possible choice of composite error injection follows from considering the rate of change in energy \cite{Slotine89,Niemeyer93}
\[
\qd\T \btau = \frac{ {\rm d}}{ {\rm d} t}\left[ \frac{1}{2} \qd\T \H \qd \right] + \qd\T\g
\]
Again, filtering both sides, taking the impulse response of the filter convolution, and applying integration by parts, it can be shown that 
\[
\frac{\lambda}{p + \lambda} \left[ \qd\T \btau \right] = \frac{\lambda}{2} \qd\T \H \qd - \frac{\lambda}{p+\lambda} \left[\frac{\lambda}{2} \qd\T \H \qd - \qd\T \g \right]
\]
Similar to before, the right hand side can be formed from the time histories of $\q, \qd$ and is linear in parameters $\a$. Thus, there exists a modified filtered regressor $\W(\q,\qd)$ such that
\begin{align}
\frac{\lambda}{p + \lambda} \left[ \qd\T \btau \right] &= \W\, \a \nonumber \\
\e = \W \ah - \frac{\lambda}{p + \lambda}\left[ \qd\T \btau \right] &= \W \at \nonumber 
\end{align}
Thus all of the previous results hold with the filtered energy regressor in place of the filtered torques regressor.

\subsection{General Convergence Results for Composite Adaptation}
More broadly, the update law \eqref{eq:updateLaw32} could be employed with heterogeneous choices for the extra data fused at each node. That is, some robots might use filtered energy for additional fusion, while others use the filtered torques. The main message is that this additional fusion can only help the convergence process, and that this positive effect is shared through the network. We state this result more formally:

\begin{theorem}[Parameter Convergence of Multi-Robot Composite Adaptive Control wit Decentralized Updates] Consider the system \eqref{eq:delay_sys} with decentralized update laws \eqref{eq:delay_update}. If the  regressors $\{\Y_1,\ldots,\Y_n,\W_1, ..., \W_n\}$
 are collectively persistently exciting, then the parameter errors $\at_i$ all satisfy $\at_i\rightarrow \mathbf{0}$ as $t\rightarrow \infty$. Further, if $\{\W_1, ..., \W_n\}$ are collectively persistently exciting with excitation level $\alpha$, parameter errors converge exponentially with a rate determined by $\alpha$.
\end{theorem}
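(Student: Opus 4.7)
The plan is to extend the Lyapunov analysis from the fully-connected composite case to obtain both the asymptotic and the exponential claims simultaneously. Starting from $V$ in \eqref{eq:lyapfullconnect} and using the composite-augmented update law, the same calculation as in Section \ref{sec:Composite} gives
\[
\dot V = -\s\T \K_D \s - \at\T \L_\K \at - \sum_{i=1}^n \e_i\T \e_i,
\]
which is negative semidefinite. Boundedness of $\s$, $\at$, and $\e_i$ follows, and after verifying $\ddot V$ is bounded (using the standing hypotheses on $\q_d,\qd_d,\qdd_d$ together with standard closed-loop estimates), Barbalat's lemma yields $\s_i\to\mathbf{0}$, $\e_i = \W_i\at_i\to\mathbf{0}$, and $\at\T\L_\K \at \to 0$, so the estimates $\ah_i$ asymptotically agree on a common trajectory $\hat{\overline{\a}}(t)$.

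For the asymptotic-convergence half, I would then mimic the proof of Theorem \ref{thm:decentralized}. The conditions $\s_i\to \mathbf{0}$ and $\e_i\to \mathbf{0}$, combined with the standard argument of \cite{SlotineLi87c}, give that for each fixed $T>0$,
\[
\Bigl\|\int_{t_2}^{t_2+T}\!\Y_i(t)\, \at_i(t_2)\, \d t\Bigr\|\to 0, \quad \Bigl\|\int_{t_2}^{t_2+T}\!\W_i(t)\, \at_i(t_2)\, \d t\Bigr\|\to 0
\]
as $t_2\to\infty$. Substituting $\at_i(t_2) = \overline{\at}(t_2) + o(1)$ by consensus, summing over $i$, and invoking the collective persistency of excitation hypothesis on the merged family $\{\Y_1,\ldots,\Y_n,\W_1,\ldots,\W_n\}$ lower-bounds the cumulative integral of $\overline{\at}\T\overline{\at}$, forcing $\overline{\at}\to \mathbf{0}$ and hence each $\at_i\to\mathbf{0}$.

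For the exponential-rate claim under collective persistency of excitation of $\{\W_1,\ldots,\W_n\}$ alone, observe that $-\sum_i \e_i\T \e_i = -\at\T\mathbf{W}_B\T\mathbf{W}_B\, \at$, where $\mathbf{W}_B=\diag(\W_1,\ldots,\W_n)$, so $\dot V$ already contains a quadratic damping term acting \emph{directly} on the full parameter error, in addition to the Laplacian term. The closed loop can then be viewed as a cascade: the $\s$-dynamics are uniformly exponentially stable through $-\K_D$, and the $\at$-dynamics, with $\s=\mathbf{0}$, take the linear time-varying form $\dot{\at} = -\P\bigl(\L_\K + \mathbf{W}_B\T\mathbf{W}_B\bigr)\at$. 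Because $\L_\K$ damps the disagreement subspace while the $T$-average of $\mathbf{W}_B\T\mathbf{W}_B$ is bounded below by $\alpha$ on the consensus subspace, the combined symmetric operator is persistently positive definite. A standard PE-based exponential stability result (e.g.\ Anderson 1977) then delivers an exponential rate determined by $\alpha$, the algebraic connectivity of $\L_\K$, and the minimum eigenvalue of $\P$; treating the decaying $\s$-input to the $\at$-subsystem as a vanishing perturbation closes the cascade argument.

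The main obstacle will be this last step: establishing uniform persistent positive definiteness of $\L_\K + \mathbf{W}_B\T\mathbf{W}_B$ on the \emph{full} parameter space with an explicit lower bound in $\alpha$. The two summands are degenerate on complementary subspaces (the consensus and disagreement subspaces of the block-Laplacian), so the proof requires decomposing $\at$ along these directions, combining a spectral bound for $\L_\K$ with the averaged bound on the projected $\mathbf{W}_B\T\mathbf{W}_B$, and then assembling the pieces into a block version of the classical lemma that persistency of excitation implies exponential stability of the corresponding LTV system.
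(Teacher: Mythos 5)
First, a point of reference: the paper states this theorem without any proof, so there is no in-paper argument to compare against and your proposal has to stand on its own. You also (sensibly) read the theorem as applying to the delay-free composite system \eqref{eq:systems32}--\eqref{eq:updateLaw32} rather than the literally cited \eqref{eq:delay_sys}--\eqref{eq:delay_update}, whose update law contains no $\W_i$ term; the cross-references in the theorem statement appear to be typos. Your treatment of the first claim is sound and is the natural extension of the proof of Theorem \ref{thm:decentralized}: the composite Lyapunov rate $\dot V = -\s\T\K_D\s - \at\T\L_\K\at - \sum_i\e_i\T\e_i$ gives $\s_i\to\mathbf{0}$, $\W_i\at_i\to\mathbf{0}$ and consensus via Barbalat, and collective persistency of excitation of the merged family $\{\Y_i,\W_i\}$ is exactly persistency of excitation of the stacked regressor, which forces $\hat{\overline{\a}}\to\a$.

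The genuine gap is in the exponential-rate claim, and it is not the obstacle you flagged. The closed loop is not a cascade: $\at$ drives $\dot{\s}$ through $\Y_{\!B}\at$ in \eqref{eq:systems32}, and $\s$ drives $\dot{\at}$ through $-\P\Y_{\!B}\T\s$ in \eqref{eq:updateLaw32}, so you cannot set $\s=\mathbf{0}$ in the parameter dynamics, establish exponential stability of the resulting LTV system, and then reinsert the true $\s$ as an exogenous ``vanishing perturbation'' --- the perturbation's decay rate is itself determined by $\at$, which is the quantity you are trying to bound. The standard repair (the classical composite-adaptation argument of Slotine and Li) works on the joint state: integrate $\dot V$ over a window $[t,t+T]$, use the PE lower bound on $\frac{1}{T}\sum_i\int\W_i\T\W_i$ together with a bound on the variation of $\at$ over the window (controlled by the $\mathcal{L}_2$ quantities already present in $-\dot V$) to conclude $V(t+T)\le\rho\, V(t)$ for some $\rho<1$ depending on $\alpha$, which yields exponential convergence of the full $(\s,\at)$ state without ever decoupling the two subsystems. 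The consensus/disagreement decomposition you identify is still needed inside that window estimate --- $\L_\K$ and the averaged $\W_{\!B}\T\W_{\!B}$ with $\W_{\!B}=\diag(\W_1,\ldots,\W_n)$ are indeed degenerate on complementary subspaces and the cross terms must be absorbed --- but it is a sub-step of the integrated-Lyapunov argument, not a substitute for fixing the feedback structure.
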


\begin{remark}
 While composite adaptive control was proposed in the late 1980s, applications were originally limited due to computation requirements. However, a million-fold increase in computing power since its original development and associated algorithmic advances \cite{WangH12}  make the approach increasingly viable. The inclusion of composite adaptation also introduces the opportunity to include time-varying adaptation gains $\P_i(t)$ that respond to the excitation level of the input, for instance, through bounded-gain least-squares forgetting or other conceptually-similar methods \cite{Slotine89}.

\end{remark}

\begin{remark}
In light of the previous remark, it may be desirable to fuse multiple sources of composite adaptation. For instance, one may consider fusing multiple instances of filtered torques/energy with different cutoff frequencies. Collections of linear time-varying filters might also be considered for additional flexibility. Such filter banks would enhance collective persistency of excitation, and thus the convergence rate of the parameters. Such techniques may be viewed as an instance of model-based boosting akin to techniques in machine learning that combine weak learners to obtain better predictions (e.g. \cite{Liu17b}).  Extensions to filters with linear ramp units might also be considered by exploiting their input-output scaling properties.
\end{remark}

\vspace{-0px}
\section{Simulation Results}
\vspace{-0px}
\label{sec:sim}
To show the benefits and validity of the new theory, this section reports on a simple set of simulations. These simulations illustrate the central benefits of Collective Sufficient Richness through the simplest essence of the concept. We consider two planar 3-DoF robots as depicted in Fig.~\ref{fig:sim} attempting to identify a common unknown load.  

\begin{figure}
\center
\includegraphics[width=.5 \columnwidth]{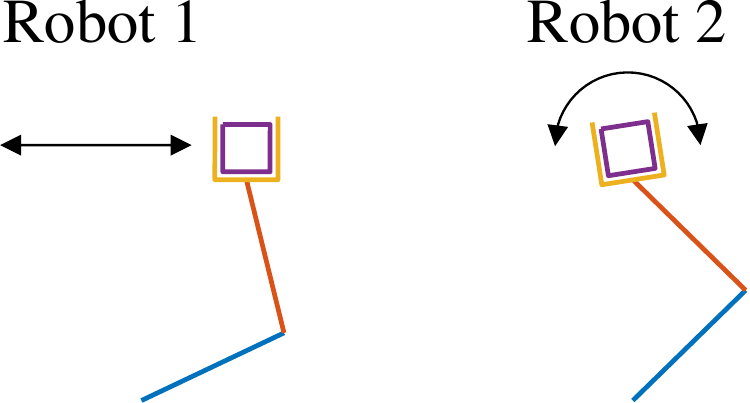}
\caption{Conceptual illustration of desired trajectories for two planar robots with a common unknown load.}
\label{fig:sim}
\end{figure}

\begin{figure}
\center
\includegraphics[width =  .95 \columnwidth]{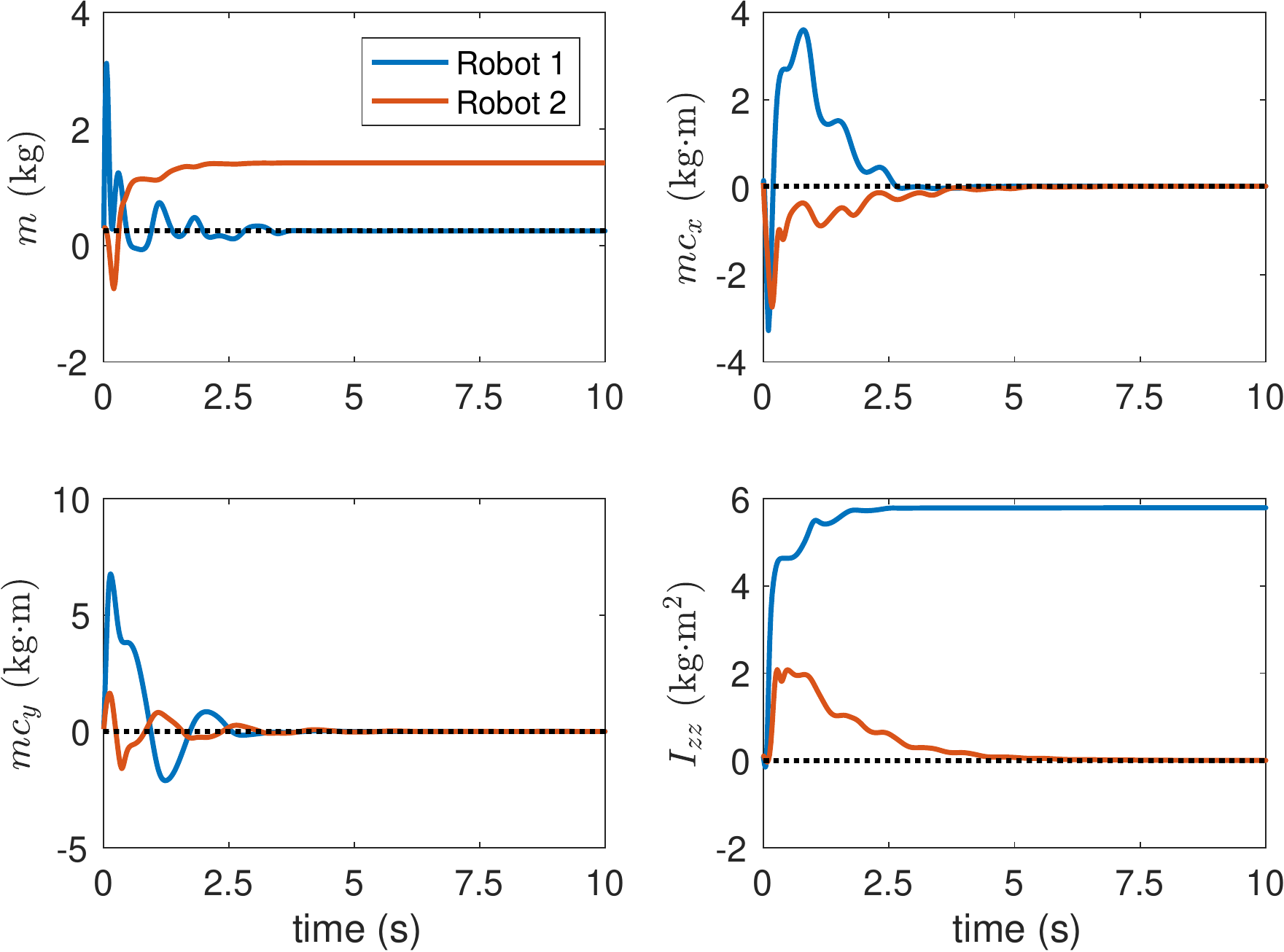}
\caption{Parameter convergence of decoupled robots.}
\label{fig:noteam}
\end{figure}

\begin{figure}[t]
\center
\includegraphics[width = .95 \columnwidth]{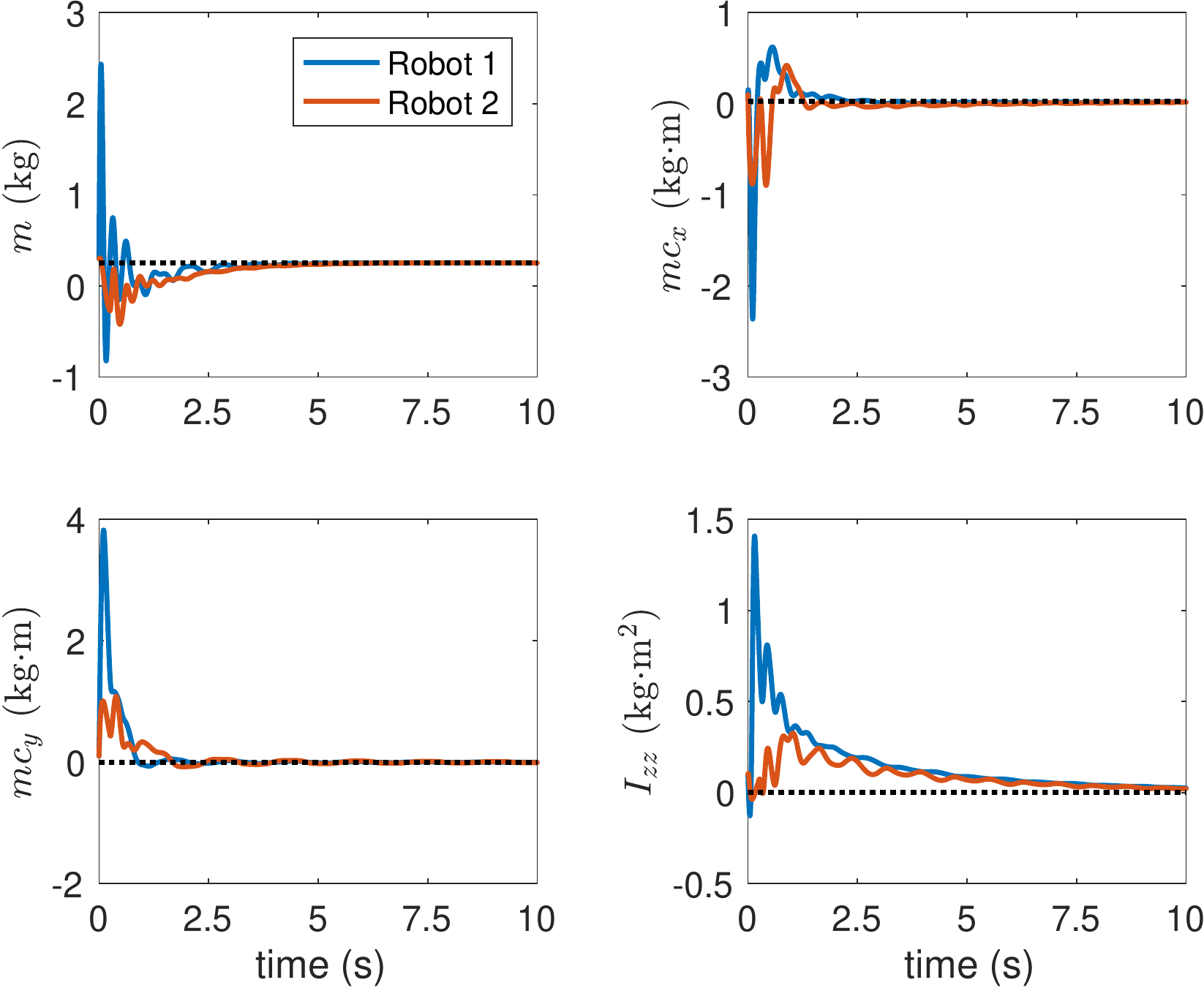}
\caption{Parameter convergence with time-delayed coupling.}
\label{fig:team}

\end{figure}

The case of no teaming when $\K=0$ is shown in Fig.~\ref{fig:noteam}. Other parameters are set as $\boldsymbol{\Lambda} = 4$ s$^{-1}$, $\K_D = 4$ N$\cdot$ m $\cdot$s/rad. Due to the trajectories followed, the first robot is unable to identify the rotational inertia of the unknown load, while the second robot cannot identify the unknown mass of the load.

Although the trajectories of neither robot are sufficiently rich, when considered together, there is enough information content in the combined signals to satisfy collective sufficient richness conditions.  Fig.~\ref{fig:team} shows this case with coupling gain $\K=5$ and a communication time delay of $0.25$s. The parameters converge to their true values despite delays thanks to benefits of collaboration in this case. Note that the mass temporarily goes negative in both examples, highlighting the potential advantage of exploiting  physical consistency constraints  \cite{Slotine86,Ioannou12,Wensing18} to be included in the adaptation laws. This inclusion represents an area for future improvements.

Figure \ref{fig:composite} illustrates the benefits of composite adaptation on the teaming process. The results show the same gain settings as Fig.~\ref{fig:team} in the absence of communication delay. Removing the delay ensures $V \rightarrow 0$ as $t\rightarrow \infty$ under collective persistency of excitation. The vertical axis is presented on a log scale, such that the slope of the curve represents the exponential rate of convergence. It is observed that the inclusion of filtered torques results in a faster convergence than filtered energy, and that fusing filtered torque and filtered energy regressors only minimally improves convergence.

\begin{figure}[t]
\center
\includegraphics[width = .85\columnwidth]{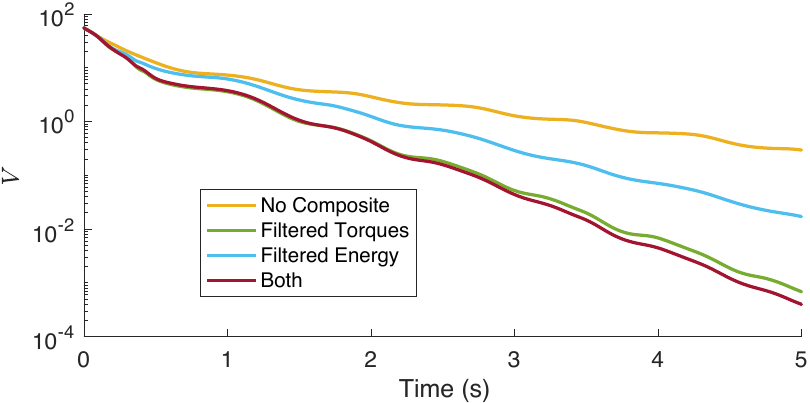}
\caption{Parameter convergence with collective composite adaptation.}
\label{fig:composite}

\end{figure}


\vspace{-0px}
\section{Conclusion}
\vspace{-0px}

This paper has treated collaboration through the cloud in the context of cooperative adaptive control for robot manipulators. We have introduced new decentralized parameter update laws and proved their convergence through the introduction of a new characterization of collective sufficient richness. With the update laws proposed, this property holds robustly with time-delayed communication or varying network topology, and can be extended to improve convergence rates through the inclusion of composite adaptation. Future work will consider extensions to underactuated systems, the inclusion of physical consistency constraints into the adaptive update laws, and the combination of time-varying delays with time-varying network topology.

\bibliographystyle{ieeetr}
\bibliography{CloudAdaptation}

\end{document}